\algnewcommand\algorithmicto{\textbf{to}}
\algnewcommand\algorithmicforeach{\textbf{for each}}
\algnewcommand\algorithmicin{\textbf{in}}
\newtheorem{theorem}{Theorem}
\newtheorem{definition}{Definition}
\newtheorem{lemma}{Lemma}
\declaretheorem[name=Proposition]{proposition}
\title{Epistasis-based Basis Estimation Method for Simplifying the Problem Space of an Evolutionary Search in Binary Representation}
\renewcommand*{\maketitle}{\bgroup\setlength{\parindent}{0pt}
	\begin{flushleft}
		\textbf{\Large \@title}
		\vskip 1em
		\setlength{\leftskip}{2.5 em}
		\@author
	\end{flushleft}\egroup
}
\author{\normalsize \textbf{Junghwan Lee and Yong-Hyuk Kim}}
\affil{\small Department of Computer Science, Kwangwoon University, 20 Kwangwoon-ro, Nowon-gu, Seoul 01897, Republic of Korea}
\affil{ Correspondence should be addressed to Yong-Hyuk Kim; yhdfly@kw.ac.kr}
\date{}
\newcommand*{\thead}[1]{\multicolumn{1}{c}{#1}}
\newcommand*{\multithead}[2]{\multicolumn{#1}{@{}c@{}}{\begin{tabular}{c} #2 \end{tabular}}}
\begin{document}
	\maketitle
	
	\begin{abstract}
		An evolutionary search space can be smoothly transformed via a suitable change of basis; however, it can be difficult to determine an appropriate basis. In this paper, a method is proposed to select an optimum basis can be used to simplify an evolutionary search space in a binary encoding scheme. The basis search method is based on a genetic algorithm and the fitness evaluation is based on the epistasis, which is an indicator of the complexity of a genetic algorithm. Two tests were conducted to validate the proposed method when applied to two different evolutionary search problems. The first searched for an appropriate basis to apply, while the second searched for a solution to the test problem. The results obtained after the identified basis had been applied were compared to those with the original basis, and it was found that the proposed method provided superior results.
	\end{abstract}

\section{Introduction}
Binary encoding typically uses a standard basis, and when a non-standard basis is used, the structure of the problem space may become quite different from that of the original problem. In an evolutionary search, various methods can be used to change a problem space by adjusting the basis, including gene rearrangement, different encoding methods, and the use of an eigen-structure ~\cite{hwang2006multi, CHANG20091210, sankoff1998multiple, raidl2000weighted, falkenauer1994new, gen2006genetic, wang2009genetic, lotfi2013genetic, pernkopf2001feature, wang2006new, wu2012novel, wyatt2003finding}.

An investigation was conducted to elucidate the possibility of changing the basis in binary encoding and the corresponding effects on the genetic algorithm (GA)~\cite{kim2008effect}; however, it was not possible to determine which basis should be applied to smooth the problem search space. In genetics, epistasis means that the phenotypic effect of one gene is masked by another gene; however, in GA, it refers to any type of gene interaction. In a problem with a large epistasis, as the genes are extremely inter-dependent, the fitness landscape of the problem space is very complex and the problem is difficult~\cite{davidor1990epistasis}.
Several studies have been conducted to assess the difficulty of problems from the perspective of epistasis~\cite{seo2003new, seo2004new, ventresca2007epistasis, seo2005computing, reeves1995epistasis, naudts2000comparison, beasley1993reducing}.
Epistasis has the advantage that it is possible to measure the extent of nonlinearity only with fitness function.
In this paper, we define the difficulty of the problem or problem search space as the nonlinearity level of gene expression. Also, we use epistasis as a measure for the difficulty of the problem.

There are three main contributions of this paper. First, an epistasis approximation is used to identify a basis that will reduce the complexity of an evolutionary search problem. Second, the basis is expressed by a variable-length encoding scheme using an elementary matrix. Finally, a GA is defined that can be used to change the basis of an evolutionary search space. This means that when a basis is given, one can tell how it affects the GA.
Our intention in this study is that a non-separable problem can be transformed into a separable problem by performing an appropriate basis transformation. Such an altered environment enables GA to search space effectively.

This paper is organized as follows: Section~\ref{sec2:motivation} describes the principle of reducing the complexity of a problem space in an evolutionary search by changing the basis and presents the motivation for evaluating the basis using the epistasis. In Section~\ref{sec3:cob}, a method is introduced for changing a standard basis to another basis for a binary encoding problem. Then, a GA is introduced that can be used to apply a change of basis. Once an appropriate basis has been selected, this algorithm is more efficient at searching for a solution than the conventional GA. In Section~\ref{sec4:eval_basis}, a method is proposed for estimating a basis that reduces the complexity of an evolutionary search problem. Section~\ref{sec5:find_basis_GA} describes a GA that can be used to search for a basis by applying the proposed estimation method. Here, a variable length encoding scheme that consists of an elementary matrix is employed so to increase the efficiency of the search for an appropriate basis in the problem space. Section~\ref{sec6:experiments} presents a description of the tests used to validate the method and then discusses the results. In the tests, an appropriate basis for the target problem is found via the GA, and then the identified basis is applied to the target problem. The conclusions that can be drawn from this study are presented in Section~\ref{sec7:conclusions}.

\section{Motivation} \label{sec2:motivation}
In this section, the concept of the epistasis is introduced as a means of estimating a basis that will reduce the complexity of the problem. First, a principal component analysis (PCA) is used to extract important information by changing the basis in real number encoding. Next, an example of changing the basis in binary encoding is presented to illustrate that a complex problem can be converted to a simple problem by changing the basis. Lastly, the epistasis between the original and modified problems are compared. If the epistasis of the problem decreases when the basis is changed, it implies the complexity of the original problem has decreased. Thus, a suitable basis can be identified using the changes in the epistasis before and after the prospective basis has been applied to the problem of interest.

\subsection{An Example of Changing a Basis in \boldmath$\mathbb{R}^n$}
A PCA is used to obtain the principal components of the data by transforming the data into a new coordinate system via an orthogonal transformation. When the data is projected in the coordinate system, the position where the variance is the largest becomes the first principal component. The second principal component is in a position that is orthogonal to the previous component at the position with the second largest variance. Consequently, if the eigenvectors and eigenvalues of the covariance matrix are obtained and sorted in descending order, the principal components can be found. This is identical to changing the basis from the original coordinate system to a coordinate system based on the variance of the data. In general, by using only the important principal components, lost data are used. 

\subsection{Change of Basis in Binary Representation} \label{sec22:cob}
Binary encoding typically employs a standard basis; however, it is sometimes easier to manipulate a problem in a non-standard basis. The following example illustrates that the relationship between the basis vectors is dependent on the basis. Here, $ \mathbb{Z}_2 $ is a field that has elements of zero and one, the addition operator corresponds to the exclusive-or (XOR) operator, and the multiplication operator corresponds to the AND operator. The standard basis $ B_s $ for vector space $ \mathbb{Z}_2^n $ is $ \left\{e_1,e_2,\ldots,e_n\right\} $, where $ e_i $ consists of column vectors in which the $ i $-th entry is one and the remaining $ n-1 $ entries are zero.

In the vector space $ \mathbb{Z}_2^n $, if the vector $ v $ and the evaluation function $ F $ are as follows, then the basis vector $ e_i $ of $ B_s $ has a dependency relationship with the other basis vectors $ e_j $ in $ F $.
\begin{align*}
v&=\sum_{i=1}^{n}{\alpha_ie_i}=\left(\alpha_1,\alpha_2\ldots,\alpha_n\right), \\
F\left(v\right)&=\sum_{i=1}^{n}{\left(\alpha_1\oplus\alpha_2\oplus\cdots\oplus\alpha_n\right)\oplus\alpha_i},
\end{align*}
where $ \alpha_i\in\mathbb{Z}_2 $ and $ \oplus $ is the XOR operator.

Let us assume a function $ F^\prime $ performs the same operation as $ F $ but in a new basis and suppose $ n $ is even. If a set $ B $ is composed as follows:
\begin{equation*}
B=\left\{e_i^\prime \,\middle|\, \sum_{e_j\in B_s} e_j-e_i,\, \forall i= 1,2,\ldots,n \right\}.
\end{equation*}
then $ B $ becomes the basis. One property of a basis is that every vector can be represented as a linear combination of basis vectors. That is,
\begin{equation*}
v=\sum_{i=1}^{n}{\alpha_ie_i}=\sum_{i=1}^{n}{\alpha_i^\prime e_i^\prime},
\end{equation*}
where $ \alpha_i^\prime=\sum_{j=1}^{n}\alpha_j+\alpha_i $ and $ \left[v\right]_B=\left(\alpha_1^\prime,\alpha_2^\prime\ldots,\alpha_n^\prime\right) $, which is the representation of $ v $ with respect to the basis $ B $.\\
Here, $ F^\prime $ is a function that evaluates $ \left[v\right]_B $, has the same operation as $ F\left(v\right) $, and satisfies the following relationship:
\begin{equation*}
F\left(v\right)=\sum_{i=1}^{n}{\left(\alpha_1\oplus\alpha_2\oplus\cdots\oplus\alpha_n\right)\oplus\alpha_i}=\sum_{i=1}^{n}\alpha_i^\prime=F^\prime\left(\left[v\right]_B\right) .
\end{equation*}
It can be seen that the basis vector $ e_i^\prime $ of $ B $ is independent of the other basis vectors in $ F^\prime $. In fact, $ F^\prime  $ is identical to the onemax problem that counts the number of ones in a bitstring. Therefore, for a vector in which all $ \alpha_i^\prime $ are set to one, the evaluation value becomes the largest value, and if this vector is transformed with the standard basis, an optimum solution can be obtained. Figure~\ref{fig:rel} shows the relationships of the basis vectors according to the basis with $ n = 6 $ in the graphs.
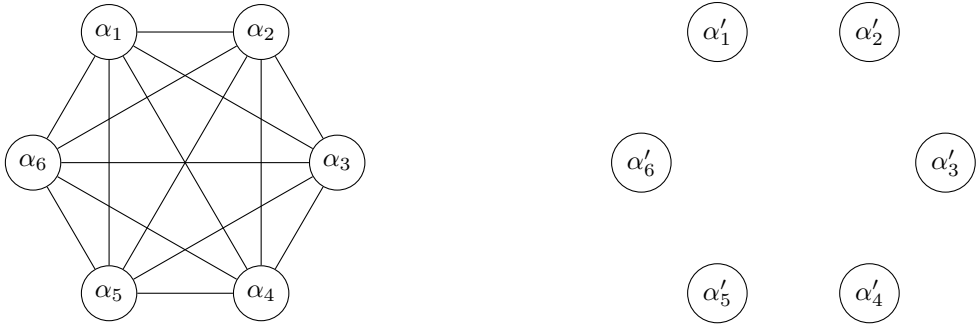
\begin{figure}[H]
	\centering
	\begin{tikzpicture}
	\def \n {6}
	\def \radius {2cm}
	\def \margin {8} 
	
	\foreach \s in {1,...,\n}
	{
		\node(\s)[draw, circle, inner sep=0pt, text width=7mm, align=center,] at ({360/\n * -(\s - 3)}:\radius) {$ \alpha_{\s}$};
	}
	\draw[] (1) -- (2); \draw[] (1) -- (3); \draw[] (1) -- (4); \draw[] (1) -- (5); \draw[] (1) -- (6);
	\draw[] (2) -- (3); \draw[] (2) -- (4); \draw[] (2) -- (5); \draw[] (2) -- (6);
	\draw[] (3) -- (4); \draw[] (3) -- (5); \draw[] (3) -- (6);	
	\draw[] (4) -- (5); \draw[] (4) -- (6);	
	\draw[] (5) -- (6);	
	
	\begin{scope}[shift={(8,0)}]
	\foreach \s in {1,...,\n}
	{
		\node[draw, circle, inner sep=0pt, text width=7mm, align=center,] at ({360/\n * -(\s - 3)}:\radius ) {$ \alpha^\prime_{\s}$};
	}
	\end{scope}
	\end{tikzpicture}
	\caption{Dependency relationship of the different basis vectors of $ n = 6 $ (left side: standard basis, right side: basis $ B $).} \label{fig:rel}
\end{figure}

\subsection{Epistasis According to the Basis}
In a GA, the epistasis indicates the correlation between the genes. If the epistasis for a particular problem is large, then the genes are very inter-dependent, the fitness landscape of the problem space is extremely complex and the problem is difficult. In Section~\ref{sec22:cob}, it was shown that the complexity of a problem varies depending on the basis. The epistasis numerically expresses the complexity of such a problem. In general, when the genes in a problem are very dependent, the epistasis has a large value. In contrast, when the genes are independent, the value is zero.

The results of calculating the epistasis according to the problem size $ n $ of evaluation functions $ F $ and $ F^\prime $ in Section~\ref{sec22:cob} are shown in Table~\ref{tab:epi_func}. In this paper, the method proposed by Davidor~\cite{davidor1990epistasis} is used to compute the epistasis. In $ F $, because the dependency relationship with other basis vectors increases as $ n $ increases, the epistasis also increases. However, for $ F^\prime $, since the basis vectors are independent, the epistasis is zero. Thus, it is expected that the search space can be simplified via an appropriate change of basis.

The epistasis can be used to check if the search space can be simplified by using a particular basis. If the epistasis of the problem after changing the basis is lower than the epistasis of the original problem, then this indicates that the problem has become easier. However, using the epistasis in this way requires all solutions to be searched. An alternative is to estimate the actual epistasis by calculating the epistasis of a sample set of solutions.
Note that nonlinearity may be misleading due to approximation error by solution sampling. It hinders to find the proper basis for the target problem. The target problem may be transformed into a more complex problem through a basis transformation. That is, the basis transformation can rather prevent a GA from efficiently finding the solution.

\begin{table}[H]
	\caption{Epistasies of evaluation functions $ F $ and $ F^\prime $.} \label{tab:epi_func}
	\vspace*{0.2cm}
	\centering
	\begin{tabular}{ccc}
		\toprule
		$ n $ & Epistais in $ F $ & Epistasis in $ F^\prime $   \\ 
		\midrule
		2 &  0.0 & 0   \\
		4 &  1.0 &  0  \\
		6 &  1.5 & 0  \\
		8 &  2.0 & 0  \\
		10&  2.5 & 0  \\
		12&  3.0 & 0  \\
		14&  3.5 & 0  \\
		16&  4.0 & 0\\
		\bottomrule
	\end{tabular}
\end{table}

\section{Change of Basis} \label{sec3:cob}
This section presents a GA that performs an effective search through a change of basis. Before presenting the GA, we introduce the related terminologies and theories of change of basis in binary representation.
Next, we apply the change of basis in the onemax problem to show how the problem actually transformed. In addition, a methodology for evaluating solutions in the transformed problem will be described.
Finally, we propose a GA that effectively searches solutions through applying the change of basis.
On the other hand, searching for an appropriate basis will be covered in Sections~\ref{sec4:eval_basis} and \ref{sec5:find_basis_GA}.

\subsection{Change of Basis in \boldmath$ \mathbb{Z}_2^n $}
A basis for an $ n $-dimensional vector space is a subset that consists of $ n $ vectors and every element of the space can be uniquely represented as a linear combination of basis vectors. Since it is possible to use one or more bases in a vector space, the coordinate representation of a vector with respect to the basis can be transformed via an equivalent representation to other bases via the invertible linear transformation. Such a transformation is called a \textit{change of basis}. The following theorem was derived from the basic theory of linear algebra~\cite{friedberglinear}.
\begin{theorem} \label{thm:2base}                                                                                                                                                                                                                                                                                                                                                                                                                                                                                                                                                               
	Let $ B_1 $ and $ B_2 $ be two bases for $ \mathbb{Z}_2^n $. Then, there exists a nonsingular matrix $ T\in M_{n\times n}\left(\mathbb{Z}_2\right) $ such that for every $ v\in\mathbb{Z}_2^n $, $ T\left[v\right]_{B_1}=\left[v\right]_{B_2} $, where $ \left[v\right]_B $ is the representation of $ v $ with respect to the basis $ B $.
\end{theorem}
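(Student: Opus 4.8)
The plan is to construct the matrix $T$ explicitly from the two bases and then verify the asserted identity by linearity. Write $B_1=\{u_1,\ldots,u_n\}$ and $B_2=\{w_1,\ldots,w_n\}$. Since $B_2$ is a basis of $\mathbb{Z}_2^n$, each vector $u_j$ admits a unique expansion $u_j=\sum_{i=1}^{n}T_{ij}w_i$ with coefficients $T_{ij}\in\mathbb{Z}_2$; I would then define $T=(T_{ij})$ to be the matrix whose $j$-th column is exactly the coordinate vector $[u_j]_{B_2}$. The uniqueness of this expansion is precisely the defining property of a basis, and it is what guarantees that $T$ is well defined.

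Next I would verify $T[v]_{B_1}=[v]_{B_2}$ for an arbitrary $v$. Writing $[v]_{B_1}=(\alpha_1,\ldots,\alpha_n)^\top$, so that $v=\sum_{j}\alpha_j u_j$, I would substitute the expansion of each $u_j$ and regroup:
\[
v=\sum_{j=1}^{n}\alpha_j\sum_{i=1}^{n}T_{ij}w_i=\sum_{i=1}^{n}\Bigl(\sum_{j=1}^{n}T_{ij}\alpha_j\Bigr)w_i.
\]
By uniqueness of the representation in $B_2$, the $i$-th coordinate of $[v]_{B_2}$ equals $\sum_{j}T_{ij}\alpha_j$, which is exactly the $i$-th entry of $T[v]_{B_1}$. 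Since $v$ was arbitrary, the identity holds for every $v\in\mathbb{Z}_2^n$. All arithmetic is carried out in the field $\mathbb{Z}_2$, so nothing beyond ordinary linear algebra over a field is required.

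For nonsingularity, the cleanest route exploits the symmetry of the setup: reversing the roles of $B_1$ and $B_2$ produces, by the same construction, a matrix $S$ with $S[v]_{B_2}=[v]_{B_1}$ for all $v$. Combining the two identities gives $ST[v]_{B_1}=[v]_{B_1}$ for every $v$; since $[v]_{B_1}$ ranges over all of $\mathbb{Z}_2^n$ as $v$ does, this forces $ST=I_n$, so $T$ is invertible over $\mathbb{Z}_2$ and hence nonsingular. Equivalently, the columns of $T$ are the $B_2$-coordinate vectors of the linearly independent set $B_1$, so they are themselves linearly independent, which again yields nonsingularity.

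I do not anticipate a genuine obstacle, as this is the standard change-of-basis theorem; the only point demanding attention is to keep invoking the uniqueness of basis expansions at each step, since that is what legitimizes both the definition of $T$ and the coordinate comparison. Working over $\mathbb{Z}_2$ rather than $\mathbb{R}$ changes nothing structurally, because $\mathbb{Z}_2$ is a field and the argument uses only field operations.
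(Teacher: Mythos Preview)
Your proof is correct and is precisely the standard change-of-basis argument one finds in any linear algebra text. In fact, the paper does not supply its own proof of this theorem at all: it simply states the result and attributes it to the basic theory of linear algebra, citing Friedberg, Insel, and Spence. So there is no ``paper's approach'' to compare against beyond that citation, and your explicit construction---defining the $j$-th column of $T$ as $[u_j]_{B_2}$, verifying the coordinate identity by linearity and uniqueness of basis expansions, and obtaining invertibility either by building the inverse $S$ symmetrically or by noting the columns of $T$ are independent---is exactly the argument a reader of that reference would find.
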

A matrix $ A $ is defined as binary if $ A \in M_{n \times n }(\mathbb{Z}_2)$. In general, if $ B $ is the standard basis, $ \left[v\right]_B $ is the representation of $ v $ with respect to the basis $ B $. In Theorem~\ref{thm:2base}, nonsingular binary matrix $ T=\left[T\right]_{B_1}^{B_2} $ is a coordinate-change matrix from basis $ B_1 $ to $ B_2 $. When a $ T $ is given, $ T $ can be viewed as a coordinate-change matrix from the standard basis to $ B_T $, which is related to the $ T $. For every vector $ v\in\mathbb{Z}_2^n $, $ Tv=\left[v\right]_{B_T} $ holds and $ B_T $ is $ \left\{Te_1, Te_2, \ldots,Te_n\right\} $. This study considers a change of basis from a standard basis to another basis. Thus, estimating the basis is equivalent to estimating an appropriate $ T $.

\subsection{Analysis of Changing a Basis in the Onemax Problem} \label{sec32:cob_onemax}
The onemax problem maximizes the number of ones in a bitstring and has zero epistasis. Here, a onemax problem in which the basis was changed using a selected nonsingular binary matrix $ T $ is compared to the original onemax problem. The specific onemax problem of interest has a size of three. The $ T $ is defined as follows:
\begin{equation*}
T=\left(\begin{matrix}1&0&0\\1&0&1\\0&1&0\\\end{matrix}\right).
\end{equation*}
Then, it can be shown that $ B_T=\left\{Te_1,Te_2,Te_3\right\}=\left\{\left(\begin{matrix}1\\1\\0\\\end{matrix}\right),  \left(\begin{matrix}0\\0\\1\\\end{matrix}\right),\left(\begin{matrix}0\\1\\0\\\end{matrix}\right)\right\}$. Table~\ref{tab:onemax} shows the original vector and that obtained using $ Tv=\left[v\right]_{B_T} $. From this, it can be seen that after the basis change, the problem became more complex.

\begin{table}[H]
	\caption{The vectors with a modified basis $ B_T $ and the original vectors in the onemax problem of size 3.} \label{tab:onemax}
	\centering
	\vspace{0.2cm}
	\begin{tabular}{ccc}
		\toprule
		$ v $ & $ \left[v \right]_{B_T} $ & Fitness   \\ 
		\midrule
		$ (1,1,1)^T $ & $ (1,0,1)^T $ & 3   \\ \midrule
		$ (1,1,0)^T $ & $ (1,1,1)^T $ &    \\
		$ (1,0,1)^T $ & $ (1,0,0)^T $ & 2   \\
		$ (0,1,1)^T $ & $ (1,1,1)^T $ &    \\ \midrule
		$ (1,0,0)^T $ & $ (1,1,0)^T $ &    \\
		$ (0,1,0)^T $ & $ (0,1,0)^T $ & 1   \\
		$ (0,0,1)^T $ & $ (0,0,1)^T $ &    \\ \midrule
		$ (0,0,0)^T $ & $ (0,0,0)^T $ & 0   \\
		\bottomrule
	\end{tabular}
\end{table}

The evaluation function $ F $ of the onemax problem is as follows:
\begin{equation*}
F\left(v\right)=\sum_{i=1}^{n}\alpha_i,\ \text{where}\ v=\sum_{i=1}^{n}{\alpha_ie_i}.
\end{equation*}

On the other hand, from Table~\ref{tab:onemax}, it is difficult to identify a rule for the fitness of $ \left[v\right]_{B_T} $ for the onemax problem. The evaluation function $ F^\prime $ of $ \left[v\right]_{B_T} $ can be obtained by computing $ v $ by changing the basis from $ B_T $ to $ B_s $ and evaluating $ v $ with $ F $. That is,
\begin{equation*}
F^\prime\left(\left[v\right]_{B_T}\right)=F\left(T^{-1}\left[v\right]_{B_T}\right)=F\left(v\right),
\end{equation*}
where $ T^{-1} $ is the inverse matrix of $ T $. The above equation is obtained by multiplying the left side by $ T^{-1} $ in $ Tv=\left[v\right]_{B_T} $ and then applying $ F $ to both sides. In this way, the basis on both sides can be easily changed using $ T $ and $ T^{-1} $.

\subsection{Genetic Algorithm with a Change of Basis}
In general, a GA is expected be more efficient when searching for a solution to a simple problem than a complex problem. As shown in Section~\ref{sec22:cob}, a complex problem can be changed to a simple problem by changing the basis. With this in mind, if an appropriate a change of basis is applied to a problem space to be searched by a GA, this will greatly improve the efficiency of the search process. A flowchart of the proposed algorithm is shown in Figure~\ref{fig:flowchart} and the corresponding steps are detailed in Algorithm~\ref{alg:step}.

\begin{algorithm}[H]
	\centering
	\caption{A GA with a change of basis} \label{alg:step}
	\begin{algorithmic}
		\State
		\begin{enumerate}[leftmargin=*, label=Step \arabic*:]
			\item the population $ P $ of the $ GA $ is initialized and the fitness is evaluated.
			\item $ P $ is replaced by the population $ P^\prime $ whereby the standard basis $ B_s $ is changed to the basis $ B $.
			\item by using the genetic operator on the GA, the offspring population $ O^\prime $ is produced from $ P^\prime $.
			\item the fitness of $ O^\prime $ is evaluated using the population $ O $ that was used to change the basis from $ B $ to $ B_s $.
			\item $ P^\prime $ and $ O^\prime $ are used to create a new generation and update $ P^\prime $ to the new generation.
			\item the process from Step 3 onward is repeated as many times as there are generations. When the number of generations has been exceeded, then we return $ P^\prime $ whereby the basis $ B $ is changed to the standard basis $ B_s $.
		\end{enumerate}
	\end{algorithmic}
\end{algorithm}

\begin{figure}[H]
	\centering
	\includegraphics[width=\textwidth]{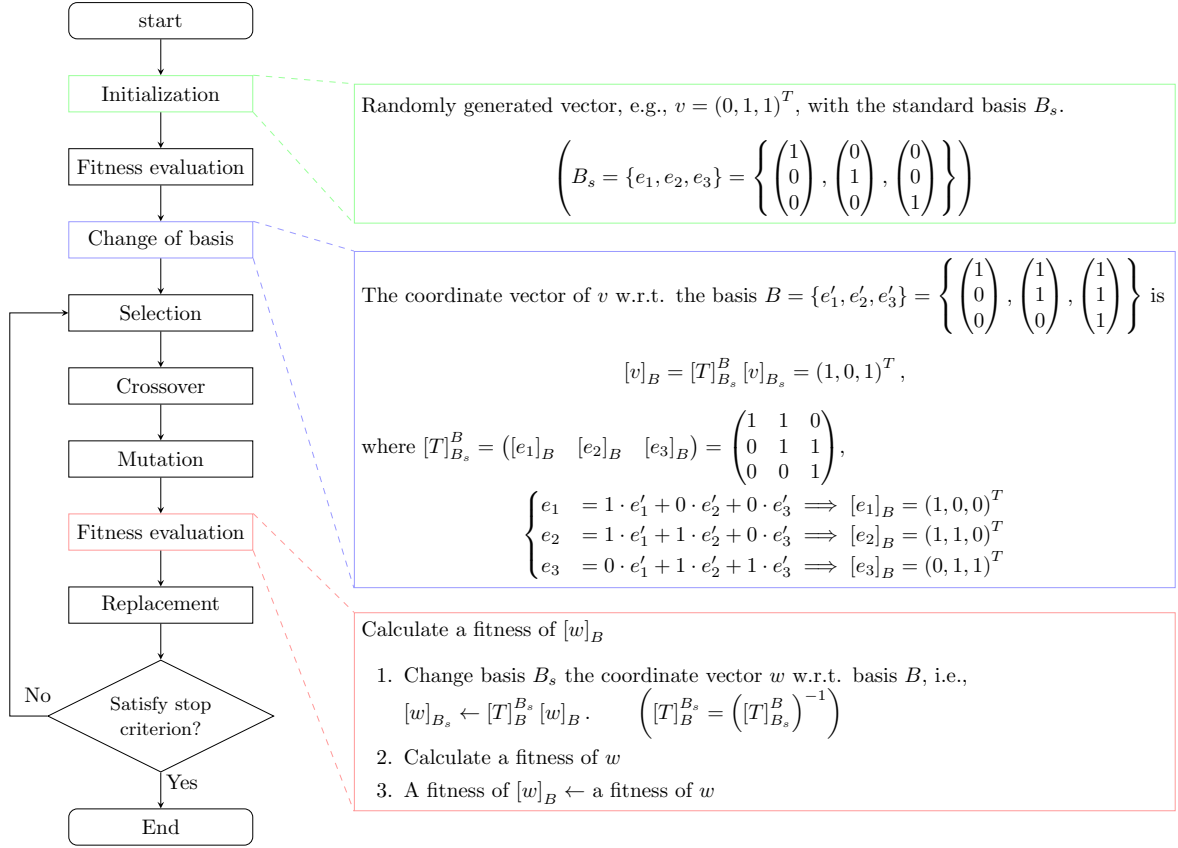}
	\caption{Flowchart of a GA with a change of basis.} \label{fig:flowchart}
\end{figure}

If Steps 2 and 4 are excluded, then Algorithm~\ref{alg:step} produces a typical GA. However, if the problem is transformed with an appropriate basis in Step 2, the original problem space is transformed into an easier problem space, which is expected to make it easier for the GA to find an optimum solution. On the other hand, Step 4 shows that the generated offspring vector is evaluated by changing the basis to the standard basis. This is identical to the method in Section~\ref{sec32:cob_onemax} that evaluates a solution in another basis.

\section{Evaluation of a Basis} \label{sec4:eval_basis}
The objective is to identify a basis that can be used to change a complex problem into to a simple problem. While such a basis was examined in Section~\ref{sec22:cob}, in that case, the change in basis converted the onemax problem from a simple to a complex problem.

When a basis and a target problem are given, a method is proposed that uses the epistasis to evaluate whether the basis is appropriate for the problem space. A meta-genetic algorithm (Meta-GA) is generally used as a method for estimating a hyperparameter of a GA. The two methods are compared to analyze the advantages and disadvantages of the proposed method.

\subsection{Evaluation with Epistasis}
Assume a target problem $ P $ and basis $ B $ are given. To determine the smoothing effect of $ B $ on $ P $, a sampling population $ S $ can be obtained from $ P $. Then, $ S^\prime $ can be obtained by changing the basis for $ S $ from the standard basis to $ B $. The epistasis of $ S^\prime $ that numerically shows the difficulty of the problem can then be calculated. The lower the epistasis is, the more appropriate $ B $ is as a basis for $ P $. The epistasis calculation method proposed by Davidor~\cite{davidor1990epistasis} is shown in Algorithm~\ref{alg:eval_epi}. Suppose the chromosome length is $ l $ and the number of samples in $ S $ is $ s $. Then, the time complexity of evaluating a single basis becomes $ O\left(l^2s\right) $. This is because the cost of executing the change of basis is $ l^2s $. The change of basis is performed for a total of s vectors, and the cost of the change of basis is $ l^2 $ since each vector $ v $ becomes $ \left[v\right]_B $ through $ \left[T\right]_{B_S}^Bv $.

\begin{algorithm}[H]
	\caption{Basis evaluation based on epistasis} \label{alg:eval_epi}
	\begin{algorithmic}[1]
		\Require Sampling population $S$
		\Procedure{Evaluation}{$ B,\,S$} \Comment{Evaluation a basis $ B $}
		\State $S^{\prime} \gets \text{Change of basis from } B_s \text{ to } B \text{ on } S $ \Comment{$ B_s$ is standard basis}
		\State
		
		\ForAll{ind}{$S^\prime$}
		\State $\mu \gets \mu + \text{\Call{v}{ind}/\Call{size}{$S^\prime$}}$\Comment{\Call{v}{ind} is a fitness of ind}
		\For{$i$}{$1$}{\Call{size}{$ind$}}
		\State $ a \gets \text{ind}[i]$ \Comment{$a$ is allele value (0 or 1)}
		\State $A[i][a] \gets A[i][a] +$\Call{v}{ind} \Comment{allele value of $a$}
		\State $C[i][a] \gets C[i][a] + 1$ \Comment{count $A[i][a]$}
		\EndFor
		\EndFor
		\State
		\For{$i$}{$1$}{\Call{size}{$ind$}}
		\ForAll{$a$}{allele values}
		\State $A[i][a] \gets A[i][a] / C[i][a]$
		\State $E[i][a] \gets A[i][a] -\mu$
		\EndFor
		\EndFor
		\State
		
		\ForAll{ind}{$S^\prime$}
		\State $G \gets 0$ \Comment{Genic value}
		\For{$i$}{$1$}{\Call{size}{$ind$}}
		\State $G \gets G + E[i][ind[i]]$
		\EndFor
		\State $G \gets G + \mu$
		\State $\sigma_{S^\prime} \gets \sigma_{S^\prime} + \left(\text{\Call{v}{ind}}-G\right)^2$ \Comment{We have the epistasis $\sigma_{S^\prime}$}
		\EndFor
		\State \textbf{return} $\sigma_{S^\prime}$
		\EndProcedure
	\end{algorithmic}
\end{algorithm}

\subsection{Evaluation with a Meta-genetic Algorithm}
The use of a meta-GA to optimize the parameters and tune GAs was first proposed by Grefenstette~\cite{Grefenstette:1986:OCP:14111.14123}. Here, a meta-GA to determine whether the basis is appropriate for the problem space of the GA. A method of evaluating a basis with a meta-GA is shown in Algorithm~\ref{alg:eval_meta_ga}. By applying Algorithm 1 with a given $ B $ and an instance of GA, $ k $ populations are searched. Then, using the best fitness in each population, the basis is evaluated. That is, when $ k $ units of fitness are found to be acceptable, it is estimated that $ B $ is an appropriate basis of the instance. The reason for searching $ k $ populations is because even with a basis that is not appropriate, a good solution may be obtained by using the GA to search once. To calculate the time complexity, with respect to the target GA, let the number of generations be $ g $, population size $ p $, and chromosome length $ l $. The time cost of line 10 in Algorithm 3 is the largest. When $ p $ offspring are generated, the time consumed is $ pl^2 $. Since this is repeated $ kg $ times, the worst case time complexity becomes $ O(kgpl^2) $. Note that in the experiment evaluated in this paper, $ k $ is set to $ 5 $ and $ g $ is set to the chromosome length.

\begin{algorithm}[H]
	\caption{Basis evaluation in a meta-GA}\label{alg:eval_meta_ga}
	\begin{algorithmic}[1]
		\Require Target GA, Search GA $ k $ times, Generations of GA $ g $
		\Procedure{Evaluation}{$ B,\,GA,\, k, \, g$} \Comment{Evaluation of a basis $ B $}
		\State BestFits[$k$] \Comment{Return array}
		\For{$ i $}{$ 1 $}{$ k $}
		\State $ P \gets \text{GA.InitPopulation} $ \Comment{Initialization of population}
		\State GA.EvalPopulation($P$) \Comment{Evaluation of the population}
		\State $ P \gets \text{ Change of basis from } B_s \text{ to } B \text{ on } P $
		\For{$ j $}{$ 1 $}{$ g $}
		\State $ P^{\prime\prime} \gets \text{GA.Selection}(P^\prime) $
		\State $ O^\prime \gets \text{GA.Recombination}(P^{\prime\prime}) $ \Comment{Perform crossover and mutation operations}
		\State $ O \gets \text{ Change of basis from } B \text{ to } B_s \text{ on } O^\prime $
		\State GA.EvalPopulation($O$)
		\State $ P^\prime \gets \text{ GA.Replace }(O^\prime, P^\prime) $
		\EndFor
		\State BestFits$[i] \gets $ the best fitness of $ P^\prime $
		\EndFor
		\State \textbf{return} BestFits
		\EndProcedure
	\end{algorithmic}
\end{algorithm}

\section{Finding a Basis Using a Genetic Algorithm} \label{sec5:find_basis_GA}
This section describes the components of the GA used to search for a basis for the problem space with the evaluation method outlined in Section~\ref{sec4:eval_basis}. The method of applying a basis and the genetic operator for the encoding are discussed, and the fitness of the basis is evaluated using the method of either Algorithm~\ref{alg:eval_epi} or \ref{alg:eval_meta_ga}.

\subsection{Encoding with an Elementary Matrix}
A nonsingular binary matrix can be regarded as a change from a standard basis to another basis. That is, a basis corresponds to an appropriate the matrix. If a typical 2D type of encoding is used to encode the matrix, a repair mechanism may be required after recombination. In this case, one option is to conduct the repair using the Gauss-Jordan method; however, this will require a length of time equal to $ O\left(n^3\right) $ time.

Every nonsingular matrix can be expressed as a product of elementary matrices~\cite{friedberglinear}. Therefore, in $ GL_n\left(\mathbb{Z}_2\right) $, if a solution is expressed as a product of elementary matrices, it is possible to maintain their invertibility. Each element in an elementary matrix can be expressed by a variable-length linear string~\cite{yoon2014mathematical}, which allows a new encoding to be applied. Note that any recombination method for a variable-length string can be used.
In the following, an elementary row operation is defined and then the elementary matrix in $ M_{n\times n}\left(\mathbb{Z}_2\right) $ is introduced.
\begin{definition}
	Let $ A\in M_{n\times n}\left(\mathbb{Z}_2\right) $. Any one of the following two operations on the rows of $ A $ is called an elementary two operation:
	\begin{enumerate}[label=(\roman*)]
		\item \label{def:ero1} Interchanging any two rows of $ A $, and
		\item \label{def:ero2} Adding a row of $ A $ to another row.
	\end{enumerate}
\end{definition}
Elementary row operations are Type 1 or Type 2 depending on whether they were obtained using \ref{def:ero1} or \ref{def:ero2} of Definition 1.
\begin{definition}
	An $ n\times n $ elementary matrix in $ M_{n\times n}(\mathbb{Z}_2)$ is a matrix obtained by performing an elementary operation on $ I_n $. The elementary matrix is said to be of Type 1 or Type 2 depending on whether the elementary operation performed on $I_n$ is a Type 1 or Type 2 operation, respectively.
\end{definition}

Let us define $ S_n^{ij} $ as an elementary matrix of Type 1 that interchanges the $ i $-th row and the $ j $-th one for $ i $ and $ j $. Also define $ A_n^{ij} $ as an elementary matrix of Type 2 that adds the $ i $-th row to the $ j $-th row for $ i $ and $ j $.

When the representation of a nonsingular binary matrix is considered in the order of an elementary matrix, this representation is not unique. Also, it is difficult to determine how many equivalent representations exist for a nonsingular binary matrix. Several equivalences were proposed by Yoon and Kim~\cite{yoon2014mathematical} as Propositions 1 and 2 by way of a simple idea. The newly discovered equivalences proposed in this paper are denoted in Proposition 3. Their proof is provided in the appendix section.

\begin{proposition}[Exchange rule]
	For each $ i,j,k $ such that $ i\neq j,\ j\neq k, \text{ and } k\neq i, $ the following five exchange rules hold.
	
	\begin{enumerate*}[label=(\roman*),itemjoin={\quad}]
		\item $ A_n^{ik}A_n^{jk}=A_n^{jk}A_n^{ik} $,
		\item $ A_n^{ij}A_n^{jk}=A_n^{ik}A_n^{ij} $,
		\item $ S_n^{ij}A_n^{ik}=A_n^{jk}S_n^{ij} $,
		\item $ S_n^{ij}A_n^{ki}=A_n^{kj}S_n^{ij} $, and
		\item $ S_n^{ij}S_n^{jk}=S_n^{jk}S_n^{ik}=S_n^{ik}S_n^{ij} $.
	\end{enumerate*}
\end{proposition}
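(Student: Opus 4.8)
The plan is to reduce each of the five identities to an elementary computation with matrix units over $\mathbb{Z}_2$, and ultimately to a finite check involving only the indices $i$, $j$, and $k$. Writing $E_{ab}$ for the matrix whose only nonzero entry is a $1$ in position $(a,b)$, I would represent the Type~2 matrices as $A_n^{ij}=I_n+E_{ji}$ (left multiplication then adds row $i$ to row $j$) and the Type~1 matrices as the transposition permutation matrices $S_n^{ij}=I_n+E_{ii}+E_{jj}+E_{ij}+E_{ji}$, using that over $\mathbb{Z}_2$ we have $-1=1$ and $x+x=0$. The only facts needed are the matrix-unit product rule $E_{ab}E_{cd}=\delta_{bc}E_{ad}$ and the involutivity $S_n^{ij}S_n^{ij}=I_n$ and $A_n^{ij}A_n^{ij}=I_n$ (the latter since $E_{ji}E_{ji}=\delta_{ij}E_{ji}=0$ when $i\neq j$). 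A convenient reduction is that left multiplication by any of these matrices fixes every standard basis vector $e_m$ with $m\notin\{i,j,k\}$; hence to prove an identity it suffices to compare the two sides on $e_i$, $e_j$, $e_k$, all other coordinates agreeing automatically.

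For the two purely additive rules (i) and (ii), I would expand both sides as products of $I_n+E_{(\cdot)}$ terms and collect, discarding every cross term $E_{ab}E_{cd}$ whose inner indices fail to match and using $x+x=0$ to cancel repeated contributions. The hypotheses $i\neq j$, $j\neq k$, $k\neq i$ are exactly what annihilate the unwanted cross terms, so each side collapses to $I_n$ plus the relevant matrix units, and the two normal forms can then be compared directly.

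The rules that mix the two types, (iii) and (iv), I would handle by conjugation rather than brute expansion. Since $S_n^{ij}$ is an involution, each is equivalent to a statement of the form $S_n^{ij}A_n^{pq}S_n^{ij}=A_n^{p'q'}$, and conjugation by a permutation matrix acts on matrix units by relabeling through the transposition: $S_n^{ij}E_{ab}S_n^{ij}=E_{\pi(a)\pi(b)}$ with $\pi=(i\,j)$. Applying this to $A_n^{pq}=I_n+E_{qp}$ shows that conjugation swaps any occurrence of $i$ with $j$ and leaves $k$ fixed, which turns $A_n^{ik}$ into $A_n^{jk}$ for (iii) and $A_n^{ki}$ into $A_n^{kj}$ for (iv). Finally, rule (v) is a statement purely about transposition permutation matrices, so I would prove it by composing permutations: each of the three products is the same $3$-cycle on $\{i,j,k\}$, and checking the image of each of $i$, $j$, $k$ settles the equality of all three expressions simultaneously.

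I expect the main obstacle to be the index bookkeeping in (iii)--(v) rather than any genuine difficulty: one must be consistent about the direction of the operation encoded by $A_n^{ij}$ (equivalently whether it contributes $E_{ji}$ or $E_{ij}$) and about the convention for reading a product of elementary matrices as a composite operation, since an inconsistent choice silently swaps which superscript migrates under conjugation. Fixing these conventions once at the outset, and then tracking the action only on $e_i$, $e_j$, $e_k$, keeps every verification short and mechanical.
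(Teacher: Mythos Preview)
The paper does not give its own proof of this proposition: it attributes Propositions~1 and~2 to Yoon and Kim~\cite{yoon2014mathematical} and supplies a proof only for Proposition~3 in the appendix (via Lemma~1 on row operations). So there is no in-paper proof to compare against; your matrix-unit and conjugation argument is a perfectly standard and correct route to verifying identities of this kind, and in spirit it matches the row-tracking technique the paper uses for Proposition~3.

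One concrete issue your plan would expose rather than resolve: item~(ii), $A_n^{ij}A_n^{jk}=A_n^{ik}A_n^{ij}$, is \emph{not} true under the paper's own definition (Lemma~1 states $A_n^{ij}$ adds row~$i$ to row~$j$, i.e.\ $A_n^{ij}=I_n+E_{ji}$). Carrying out your expansion gives
\[
A_n^{ij}A_n^{jk}=I_n+E_{ji}+E_{kj},\qquad A_n^{ik}A_n^{ij}=I_n+E_{ji}+E_{ki},
\]
which differ in the $k$-th row; a $3\times 3$ check with $(i,j,k)=(1,2,3)$ confirms the discrepancy. The identity that does hold is the three-factor compaction rule $A_n^{ik}A_n^{jk}A_n^{ij}=A_n^{ij}A_n^{jk}$ listed separately as Proposition~2(i). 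So your method is fine for (i), (iii), (iv), (v), but for (ii) you should not expect the ``two normal forms'' to match; either the source uses a different convention for $A_n^{ij}$ or the statement as printed here is a misquotation, and your write-up should flag this rather than claim a proof.
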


\begin{proposition}[Compaction rules]
	For each $ i,j,k $ such that $ i\neq j,\ j\neq k, \text{ and } k\neq i $, the following two exchange rules hold.
	
	\begin{enumerate*}[label=(\roman*),itemjoin={\quad}]
		\item $ A_n^{ik}A_n^{jk}A_n^{ij}=A_n^{ij} A_n^{jk} $ and
		\item $ A_n^{kj}A_n^{ki}A_n^{ij}=A_n^{ji}A_n^{kj} $.
	\end{enumerate*}
\end{proposition}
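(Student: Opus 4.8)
The plan is to reduce each compaction rule to a short computation with matrix units and then read off equality over $\mathbb{Z}_2$. First I would fix the convention that, under left multiplication, $A_n^{pq}$ performs the row operation ``add row $p$ to row $q$,'' so that $A_n^{pq} = I_n + E_{qp}$, where $E_{ab}$ denotes the matrix with a single $1$ in row $a$ and column $b$. The entire argument then rests on two elementary facts: the matrix-unit product rule $E_{ab}E_{cd} = \delta_{bc}E_{ad}$, and the observation that over $\mathbb{Z}_2$ every Type-2 matrix is an involution, since $(I_n+E_{qp})^2 = I_n + 2E_{qp} + \delta_{pq}E_{qp} = I_n$ whenever $p \neq q$. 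The role of the distinctness hypotheses $i\neq j$, $j\neq k$, $k\neq i$ is precisely to make the relevant Kronecker deltas vanish, so that almost every cross term collapses.

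For each rule I would expand the three-factor product on the left and the two-factor product on the right by distributing over the sums $I_n + E_{\bullet\bullet}$. The left-hand side produces $I_n$, three linear terms, three quadratic terms, and one cubic term; by the distinctness assumptions all quadratic and cubic terms annihilate except for a single surviving $E_{\bullet\bullet}$ arising from the one chain of matching inner indices, and over $\mathbb{Z}_2$ that survivor cancels one of the three linear terms. What is left is $I_n$ together with exactly two off-diagonal entries, which I then match term by term against the (shorter) expansion of the right-hand product. The decisive step is this collapse of the three-factor side down to two off-diagonal entries: identifying which single product term survives, and against which linear term it cancels, is the part that must be carried out carefully.

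As an independent check I would re-derive each identity operationally. Since every factor fixes all rows other than $i,j,k$, it suffices to introduce symbolic rows $a,b,c$, apply the factors in the order dictated by the convention (rightmost first under left multiplication), reduce modulo $2$ at each step, and record the final linear combination sitting in rows $i,j,k$ on each side; equality of these combinations finishes the proof and entirely avoids the matrix-unit bookkeeping. A third, purely algebraic route is also available: one may feed the preceding exchange rules---in particular the commutation $A_n^{ik}A_n^{jk} = A_n^{jk}A_n^{ik}$ and the braid relation $A_n^{ij}A_n^{jk} = A_n^{ik}A_n^{ij}$---into the left-hand product, shuffle factors until a repeated factor appears, and annihilate it using the involution identity $A_n^{pq}A_n^{pq} = I_n$.

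The main obstacle I anticipate is notational rather than conceptual: these identities are sensitive both to the multiplication-versus-application order and to the exact placement of indices in $A_n^{pq} = I_n + E_{qp}$, so I would pin the convention down once at the start and apply it uniformly throughout. The expansion has the useful feature of surfacing any inconsistency at once, since a mismatch between the stated index pattern and the chosen convention manifests as a transposed off-diagonal entry (an $E_{ij}$ where one expects $E_{ji}$); thus the computation doubles as a verification that the convention and the statement are mutually consistent.
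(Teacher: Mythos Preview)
The paper does not actually prove this proposition: Propositions~1 and~2 are quoted from Yoon and Kim and stated without argument; only Proposition~3 receives a proof in the appendix. So there is no proof in the paper to compare your plan against directly.

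That said, your plan is sound and, in spirit, matches what the paper does for Proposition~3. The appendix sets up Lemma~1, which records the row-level effect of $A_n^{ij}$ and $S_n^{ij}$, and then simply tracks rows $i$ and $j$ through both sides of each identity. Your ``operational'' route is exactly this row-tracking argument, and your matrix-unit expansion $A_n^{pq}=I_n+E_{qp}$ is an equivalent repackaging of the same computation. Either dispatches part~(i) in a few lines, and your third route via the exchange rules of Proposition~1 together with the involution $A_n^{pq}A_n^{pq}=I_n$ also works for~(i).

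One caution worth flagging: if you carry out your computation on part~(ii) under the paper's stated convention (Lemma~1: $A_n^{ij}$ adds row~$i$ to row~$j$), the two sides do \emph{not} agree. Initialising rows $i,j,k$ to $a,b,c$, the left-hand product yields $(a+c,\;a+b+c,\;c)$ while the right-hand product yields $(a+b+c,\;b+c,\;c)$. So either the identity has been transcribed with an index slip from the cited source, or a different convention is tacitly in force there; your closing remark about mismatches surfacing as ``transposed off-diagonal entries'' anticipates exactly this. Part~(i) checks out cleanly under the paper's convention.
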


\begin{restatable}{proposition}{myprop}
	\label{prop:myprop}
	For each $ i $ and $ j $ such that $ i\neq j $, the following three rules hold.
	
	\begin{enumerate*}[label=(\roman*),itemjoin={\quad}]
		\item $ A_n^{ij}S_n^{ij}=A_n^{ji}A_n^{ij} $,
		\item $ S_n^{ij}=A_n^{ij}A_n^{ji}A_n^{ij} $, and
		\item $ \left(A_n^{ij}A_n^{ji}\right)^2=A_n^{ji}A_n^{ij} $.
	\end{enumerate*}
\end{restatable}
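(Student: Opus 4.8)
The plan is to exploit the fact that all three relations involve only the two indices $i$ and $j$. Each of the matrices $A_n^{ij}$, $A_n^{ji}$ and $S_n^{ij}$ fixes every standard vector $e_k$ with $k\notin\{i,j\}$ and maps the plane $\mathrm{span}\{e_i,e_j\}$ into itself. After reordering coordinates so that $i,j$ come first, each generator is block diagonal of the form $\bigl(\begin{smallmatrix} M & 0\\ 0 & I_{n-2}\end{smallmatrix}\bigr)$, and block-diagonal matrices multiply blockwise. Hence every product appearing in the proposition is the identity outside the $\{i,j\}$-block, and it suffices to verify each identity on the $2\times 2$ block. This reduces the whole statement to three equations in $GL_2(\mathbb{Z}_2)$.

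First I would fix the row-operation convention forced by Definition 2: performing ``add the $i$-th row to the $j$-th row'' on $I_n$ gives $A_n^{ij}=I_n+E_{ji}$, where $E_{pq}$ denotes the matrix unit with a single $1$ in position $(p,q)$; likewise $S_n^{ij}=I_n+E_{ii}+E_{jj}+E_{ij}+E_{ji}$ over $\mathbb{Z}_2$, since $-1=1$. Restricting to the $\{i,j\}$-block, write $a=\bigl(\begin{smallmatrix}1&0\\1&1\end{smallmatrix}\bigr)$, $b=\bigl(\begin{smallmatrix}1&1\\0&1\end{smallmatrix}\bigr)$ and $s=\bigl(\begin{smallmatrix}0&1\\1&0\end{smallmatrix}\bigr)$ for the restrictions of $A_n^{ij}$, $A_n^{ji}$ and $S_n^{ij}$ respectively. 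The three claims then become $as=ba$, $s=aba$, and $(ab)^2=ba$.

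The core step is the single computation $aba=s$, which follows from two $2\times 2$ multiplications and establishes (ii). Relation (i) then follows formally, with no further matrix arithmetic: since $a^2=I_2$ (every Type 2 matrix is an involution over $\mathbb{Z}_2$, because $E_{ji}^2=0$ when $i\neq j$), left-multiplying (ii) by $a$ gives $as=a(aba)=(a^2)ba=ba$. For (iii) I would first observe that $a$ and $b$ are involutions, so $ba=(ab)^{-1}$; hence $(ab)^2=ba$ is equivalent to $(ab)^3=I_2$, i.e. to the assertion that $ab$ has order $3$. This last fact I would confirm by a direct multiplication, noting as context that $GL_2(\mathbb{Z}_2)\cong S_3$, in which $a,b,s$ are the three transpositions and $ab$ is a $3$-cycle.

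There is no serious obstacle here, as the statement is ultimately a finite verification. The only points demanding care are pinning down the correct orientation of the matrix unit, namely $A_n^{ij}=I_n+E_{ji}$ rather than $I_n+E_{ij}$, as forced by reading ``add the $i$-th row to the $j$-th row'' as a left-acting row operation, and remembering that over $\mathbb{Z}_2$ doubled terms and signs vanish, so that $(A_n^{ij})^2=I_n$ and $S_n^{ij}$ simplifies as above. With the $2\times 2$ reduction in hand, the three identities collapse to a handful of products in a six-element group.
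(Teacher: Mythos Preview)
Your proposal is correct, but the route differs from the paper's. The paper first establishes a small lemma recording how $A_n^{ij}$ and $S_n^{ij}$ act on the rows of an arbitrary $n\times n$ matrix, and then proves (i) directly by computing the $i$-th and $j$-th rows of both sides; (ii) is obtained from (i) by left-multiplying by $A_n^{ij}$ and using $(A_n^{ij})^2=I_n$; (iii) comes from the symmetry $S_n^{ij}=S_n^{ji}$, applying (ii) to each side and right-multiplying by $A_n^{ji}$. Your argument instead reduces once and for all to the $2\times2$ block inside $GL_2(\mathbb{Z}_2)\cong S_3$, proves (ii) first by the single product $aba=s$, derives (i) from (ii) (the reverse of the paper's order), and recasts (iii) as the order-$3$ statement $(ab)^3=I_2$. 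Your block reduction is cleaner and exposes the underlying $S_3$ structure, so all three identities become facts about a six-element group; the paper's row-operation lemma is slightly less elegant here but is the kind of tool that also handles the three-index identities in Propositions~1 and~2, where no $2\times2$ reduction is available.
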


For example, the encodings of matrices $ P_1 $ and $ P_2 $ are as follows: let $ P_1=S_4^{12}A_4^{21}A_4^{12} $ and $ P_2=A_4^{21}S_4^{12} $. Then, calculate $ d_e\left(P_1,P_2\right) $ based on a sequence alignment between $ P_1 $ and $ P_2 $, where $ d_e $ is the edit distance and the insertion, deletion, and replacement functions have weights of one, one, and two, respectively. First, consider the original form:
\begin{align*}
P_1&=S_4^{12}A_4^{21}A_4^{12}\ -\ , \\
P_2&=\ -\ \ -\ A_4^{12}S_4^{12}.
\end{align*}
Then, $ d_e\left(P_1,P_2\right)=3 $. This allows the parents to be changed into other forms. Note that:
\begin{equation*}
P_1=S_4^{12} A_4^{21} A_4^{12} = \left(S_4^{12} A_4^{21} A_4^{12} \right) \left(A_4^{21} A_4^{21} \right) = S_4^{12} \left(A_4^{21} A_4^{12} A_4^{21}\right) A_4^{21} = S_4^{12} S_4^{21} A_4^{21} = A_4^{21}.    		
\end{equation*}
From these rules, $ d_e\left(P_1,P_2\right) = 2 $. Thus, the propositions can produce offspring that are more similar to the parents.

\subsection{Crossover} \label{sec52:crossover}
Any recombination for a variable-length string can be used as a recombination operator for the encoding and the edit distance is typically used as the distance for the variable-length string. This changes one string into another by using a minimum number of insertions, deletions, and replacements of the elementary matrix. A geometric crossover that is associated with this distance is called a homologous geometric crossover~\cite{moraglio2006geometric}.

Several general string genetic operators can be used. In the case of a string encoding of the elementary matrix, a mathematically designed genetic operator was proposed~\cite{yoon2014mathematical}. Specifically, the geometric crossover by sequence alignment is expected to be effective. Here, alignment refers to allowing the strings to stretch in order to provide a better match. A stretched string involves interleaving the symbol `—' anywhere in the string to create two stretched strings of the same length with a minimum Hamming distance. The offspring is generated by applying a uniform crossover to the aligned parents after removing the `—' symbols. Here, two offspring solutions are generated as solutions of the two parents.

The optimal alignment of the two strings is as per the Wagner-Fischer algorithm~\cite{wagner1974string}, which is a dynamic programming (DP) algorithm that computes the edit distance between two strings of characters. This algorithm has a time complexity of $ O\left(mn\right) $ and a space complexity of $ O\left(mn\right) $ when the full dynamic programming table is constructed, where $ m $ and $ n $ are the lengths of the two strings.

\subsection{Initial Population Generation, Selection, and Mutation Replacement}
An initial population is generated with a random number of random elementary matrices. The random number is generated from a normal distribution where the mean is $ 3n $ and the standard deviation is $ n $ when the problem size is $ n $. If the random number is smaller than one, it is fixed at one. The selection operator applies a tournament selection method by choosing three parents. The mutation operator applies one of three operations, namely insertion, deletion, or replacement, to each string with a 5\% probability. Furthermore, the probability that each individual will be mutated is set at 0.2. Lastly, replacement refers to replacing the parent generation with an offspring generation. The details of this process are as follows: the selection operator is used for candidates of the offspring generation. When the population of the parent generation is $ p $, then $ p $ parents are extracted by applying the selection operator $ p $ times. The probability of two parents pairing up and applying the crossover is 0.5. When the crossover is not applied, the two parents become candidates for members of the next generation, while in the opposite case, the two offspring become candidates for members of the next generation. Each candidate proceeds with a mutation probability of 0.2 and replaces the parent generation with the next generation.

\section{Experiments} \label{sec6:experiments}
\subsection{Target Problem in Binary Representation}
In this section, two problems are described for which better solutions can be obtained with an appropriate basis.
\begin{enumerate}
	\item  Variant-onemax: for the evaluation function $ F $ of the onemax problem, vector $ v $ has an evaluation value of one. Variant-onemax is defined as counting the number of ones by changing vector $ v $ from the standard basis to a certain basis $ B $. That is, in variant-onemax, $ F\left(\left[v\right]_B\right) $ becomes the evaluation value for vector $ v $.
	
	If the basis is changed for $ v $ with the nonsingular binary matrix $ \left[T\right]_{B_s}^B $, then we have $ \left[T\right]_{B_s}^B v=\left[v\right]_B $. Then, $ F\left(\left[v\right]_B\right) $ becomes a function that counts the number of 1s in $ \left[v\right]_B $. This is therefore identical to the onemax problem as a result of an appropriate change of basis in variant-onemax. Meanwhile, from $ F\left(\left[v\right]_B\right)=F\left(\left[T\right]_{B_s}^Bv\right) $, an evaluation function of variant-onemax can be generated even when a nonsingular binary matrix is given. As for the optimum solution of variant-onemax, when the problem size is $ n $, the number of ones becomes $ n $ through the change of basis, and $ n $ becomes the optimal solution.
	
	\item $ NK $-landscape: the $ NK $-landscape model consists of a string of length $ N $ and a fitness contribution is attributed to each character depending on the other $ K $ characters. These fitness contributions are often randomly chosen from a particular probability distribution. In addition, the number of hills and valleys can be adjusted by varying $ N $ and $ K $. One of the reasons why the $ NK $-landscape model is used in optimization is that it is a simple instance of an NP-hard problem.
	
\end{enumerate}

In the experiments, the GA is used to search for solutions to the above the two problems. The GA consists of tournament selection, one-point crossover, and flip mutation, and the replacement replaces all the parent generations with offspring generations. The tournament selection process chooses the best solution among three randomly selected parents, the one-point crossover combines a solution involving two offspring with the solution of two parents, while in flip mutation, each gene is flipped from zero to one or from one to zero with a probability of 0.05. The replacement method is the same as that described in Section~\ref{sec5:find_basis_GA}. In other words, in the composition of the next generation, the number of parents extracted is equal to the number in the population. Two parents are paired up with a 50\% probability that the crossover will be applied. When the crossover is not applied, the two parents become member candidates of the next generation, while in the opposite case, the two offspring become member candidates of the next generation. Each member candidate undergoes mutation with a 20\% probability that it will replace an existing parent. When the chromosome length of variant-onemax or $ NK $-landscape is $ n $, the size of the population is set to $ 4n $. Because the fitness of the optimum solution of the variant-onemax problem is $ n $, solutions of 10,000 generations have to be searched until an optimum solution has been identified. In the $ NK $-landscape, the fitness of the optimum solution is different for each $ N,\ K $, and all solutions must be searched to obtain an optimum solution. Thus, 300,000 generations must be searched to find an optimum solution for the $ NK $-landscape problem.

\subsection{Results}
The evaluation function of variant-onemax requires a nonsingular binary matrix that corresponds to a basis. For the basis of variant-onemax that has a chromosome length of $ n $, a random number of elementary matrices are generated and then are multiplied sequentially. The number of elementary matrices is generated from a normal distribution that has a mean of $ 3n $ and a standard deviation of $ n/2 $.

In the experiment, instances of variant-onemax where $ n $ was 20, 30, and 50 were generated. With the GA described in Section~\ref{sec5:find_basis_GA}, the following bases were searched for each instance: meta-GA-based basis $ B_1 $, epistasis-based basis $ B_2 $ where the sampling number was $ n^2 $, and epistasis-based basis $ B_3 $ where the sampling number was $ n^3 $.

A total of 100 independent searches were conducted for each instance, and the number of times that an optimum solution was identified was counted along with the execution time. The results for the variant-onemax experiment are shown in Table~\ref{tab:result_var}. In the table, a type of `Original' indicates that a solution instance was evaluated without a change of basis. Similarly, `Meta,' `Epistasis-sq,' and `Epistais-cu' refer to evaluating solution instances by applying $ B_1,B_2 $, and $ B_3 $, respectively, to change the basis. In addition, the box plot in Figure~\ref{fig:boxplot_var} depicts the fitness distribution of the 100 best solutions obtained by performing 100 independent searches for each instance. A fitness is a value between zero and one that can be obtained by dividing the fitness of the optimum solution. That is, a value of one on the $ y $-axis indicates the fitness of an optimum solution, while values approaching zero indicate a lower fitness. In most cases, it can be seen that the search performance of the GA is efficient with the change of basis. When $ N $ is 50, `Epistasis-cu' does not seem to improve the search performance of the GA. This was likely because the population of the GA was not evenly distributed throughout the sample population.

In Table~\ref{tab:result_var}, `Meta' found opimal solutions more frequently than the other methods. In particular, when $ n $ was 30, the 82nd most optimal solution was obtained out of 100. This indicates that the corresponding basis was appropriate. However, because the computation time for this approach was very long, it cannot be applied in practice. Note that when $ n $ is 50, it was over 2 hours. Furthermore, no difference was observed when compared to the case in which the basis was not changed. The method of evaluating the basis using the epistasis provides a good indication of when changing the basis will provide a better result. In particular, when $ n $ is 20, the number of optima found in `Original' is 30, and the numbers of optima found in `Epistasis-sq' and `Epistasis-cu' are 64 and 33, respectively. In summary, these tests confirmed that a sample size of $ n^2 $ provided good results while requiring less time than a sample size of $ n^3 $. Therefore, in terms of time and performance, a sample size of $ n^2 $ was deemed reasonable for estimating an epistasis.

\begin{table}[H]
	\centering
	\caption{Results of each of the best solutions obtained by conducting the GA experiments 100 times on an instance of the variant-onemax problem. (`\# of optima' is the number of optima found during 100 experiments, `Average' is the average of 100 best solutions, and `SD' is the standard deviation of 100 best solutions. $ Q_1, \, Q_2, $ and $ Q_3 $ are the first, second, and third quartiles, respectively. `Time' is the sum of the time to search for the basis and that for the GA experiments.)} \label{tab:result_var}
	\vspace*{0.2cm}
	\begin{threeparttable}
	\begin{tabular}{ccccccccr@{:}l}
	\toprule
	$ n $ & Type & \# of optima & Average & SD & $ Q_1 $ & $ Q_2 $ & $ Q_3 $ &  \multithead{2}{Time \\ (mm:ss)\tnote{*}} \\
	\midrule
	\multirow{4}{*}{20} & Original		& 30 & 0.945 & 0.0452 & 0.900 & 0.950 & 1.000 & 0 & 44 \\
		                & Meta			& 66 & 0.980 & 0.0302 & 0.950 & 1.000 & 1.000 & 3 & 07 \\
		                & Epistasis-sq	& 64 & 0.982 & 0.0241 & 0.950 & 1.000 & 1.000 & 1 & 01 \\
		                & Epistasis-cu	& 33 & 0.964 & 0.2760 & 0.950 & 0.950 & 1.000 & 3 & 11 \\
	\midrule
	\multirow{4}{*}{30} & Original		& 31 & 0.963 & 0.0329 & 0.930 & 0.970 & 1.000 & 1 & 09  \\
               			& Meta			& 82 & 0.993 & 0.0155 & 1.000 & 1.000 & 1.000 & 12 & 15 \\
						& Epistasis-sq	& 47 & 0.979 & 0.0216 & 0.967 & 0.967 & 1.000 & 3 & 49  \\
						& Epistasis-cu	& 40 & 0.979 & 0.0187 & 0.967 & 0.967 & 1.000 & 7 & 03  \\
	\midrule
	\multirow{4}{*}{50} & Original		& 0  & 0.931 & 0.0257 & 0.920 & 0.940 & 0.940 & 2 & 58   \\
						& Meta			& 0  & 0.939 & 0.0240 & 0.920 & 0.940 & 0.960 & 136 & 46 \\
						& Epistasis-sq	& 2	 & 0.934 & 0.0272 & 0.920 & 0.940 & 0.945 & 7 & 48   \\
						& Epistasis-cu	& 0	 & 0.927 & 0.0272 & 0.900 & 0.920 & 0.940 & 67 & 59  \\	
	\bottomrule
	\end{tabular}
	\begin{tablenotes}
		\footnotesize
		\item[*] On Intel (R) Core TM i7-6850K CPU @ 3.60GHz
	\end{tablenotes}
	\end{threeparttable}
\end{table}

\begin{figure}[H]
	\centering
	\includegraphics[width=0.8\textwidth]{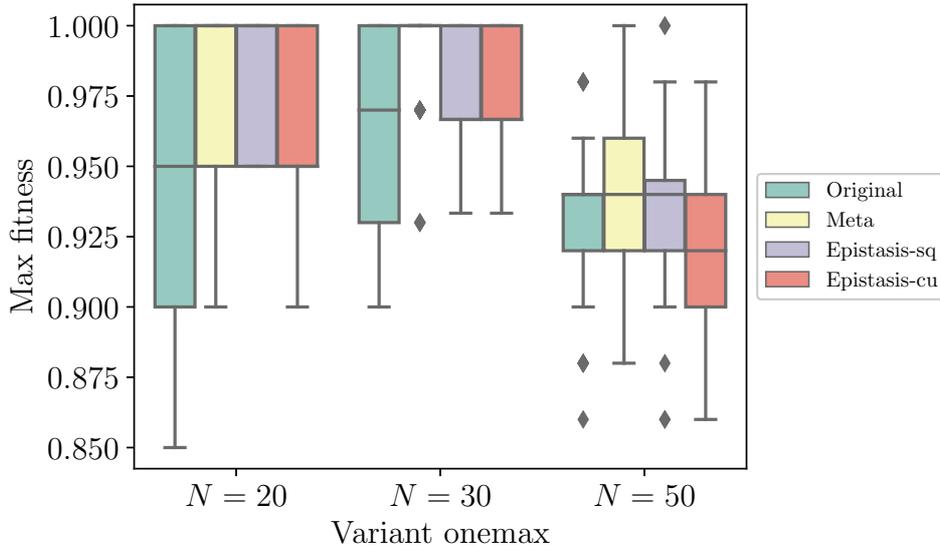}
	\caption{A box plot of each of the best solutions obtained by conducting the GA experiment 100 times on an instance of the variant-onemax problem.} \label{fig:boxplot_var}
\end{figure}

The value of $ N $ in the $ NK $-landscape experiment represents the size of the problem. In this experiment, there were $ N $ characters of zero and one and the total number of populations was $ 2^N $. The evaluation functions were randomly generated according to $ K $. In terms of the instance generation, each gene was dependent on $ K $ other genes and a value between $ \left[0,1\right] $ was assigned. The fitness of the $ NK $-landscape is based on the fitness of each gene. Therefore, the maximum and minimum fitness values, which are between zero and one, may be different for each instance. In the experiment, 100 independent searches for a solution are conducted for each instance. Table~\ref{tab:result_nk} shows the results of $ NK $-landscape experiment in which the best solution and the computation time for each of the 100 searches were compared. In the table, when the type is `Epistasis,' this indicates that a basis was obtained based on the epistasis using a sample set of size $ n^2 $, and which 100 independent searches were conducted for that instance. A box plot showing the distribution of the 100 best solutions is shown in Figure~\ref{fig:boxplot_nk}.

Upon analysis, the method of searching for the solution after changing the basis exhibited better performance than the original problem. In particular, in the box plot, it can be seen that the distribution of solutions obtained by changing the basis was more concentrated and had a higher mean. In the $ NK $-landscape, when `Meta' and `Epistasis' were compared, neither side exhibited better performance. However, it can be seen that the computation time of ‘Meta’ was about 4–30 times longer than that of `Epistasis.' Furthermore, although the `Epistasis' consumed slightly more time than the `Original,' it tended to have a more efficient evolutionary search. For these reasons, the method used to obtain the `Epistasis' results was found to be the best among the three methods evaluated.

\begin{table}[H]
	\centering
		\caption{Results of each of the best solutions obtained by conducting the GA experiments 100 times on an instance of the $ NK $-landscape problem. (`Best' is the best fitness among solutions found in 100 experiments, `Average' is the average of 100 best solutions, and `SD' is the standard deviation of 100 best solutions. $ Q_1, \, Q_2, $ and $ Q_3 $ are the first, second, and third quartiles, respectively. `Time' is the sum of the time to search for the basis and that for the GA experiments.)} \label{tab:result_nk}
	\vspace*{0.2cm}
	\begin{threeparttable}
	\begin{tabular}{ccccccccr@{:}l}
	\toprule
	$ N,\, K$ & Type & Best & Average & SD & $ Q_1 $ & $ Q_2 $ & $ Q_3 $ & \multithead{2}{Time \\ (mm:ss)\tnote{*}} \\
	\midrule
	\multirow{3}{*}{20, 3}	& Original  & 0.817 & 0.8135 & 0.0085 & 0.8170 & 0.8170 & 0.8170 & 1 & 02 \\
							& Meta	    & 0.825 & 0.8226 & 0.0057 & 0.8250 & 0.8250 & 0.8250 & 5 & 52 \\
							& Epistasis & 0.825 & 0.8200 & 0.0056 & 0.8170 & 0.8170 & 0.8250 & 1 & 32 \\
	\midrule
	\multirow{3}{*}{20, 5}	& Original  & 0.761 & 0.7449 & 0.0157 & 0.7400 & 0.7405 & 0.7610 & 1 & 03 \\
							& Meta	    & 0.761 & 0.7533 & 0.0131 & 0.7470 & 0.7610 & 0.7610 & 5 & 39 \\
							& Epistasis & 0.761 & 0.7505 & 0.0109 & 0.7460 & 0.7470 & 0.7610 & 1 & 40 \\
	\midrule
	\multirow{3}{*}{20, 10}	& Original  & 0.779 & 0.7306 & 0.0253 & 0.7020 & 0.7335 & 0.7520 & 1 & 10 \\
							& Meta      & 0.785 & 0.7572 & 0.0155 & 0.7660 & 0.7550 & 0.7660 & 7 & 13 \\
							& Epistasis & 0.785 & 0.7558 & 0.0136 & 0.7460 & 0.7530 & 0.7653 & 2 & 16 \\
	\midrule
	\multirow{3}{*}{30, 3}	& Original  & 0.776 & 0.7687 & 0.1373 & 0.7740 & 0.7760 & 0.7760 & 2 & 06 \\
							& Meta      & 0.776 & 0.7719 & 0.0109 & 0.7760 & 0.7760 & 0.7760 & 5 & 39 \\
							& Epistasis & 0.776 & 0.7718 & 0.0090 & 0.7740 & 0.7760 & 0.7760 & 1 & 40 \\
	\midrule
	\multirow{3}{*}{30, 5}	& Original  & 0.795 & 0.7725 & 0.0125 & 0.7638 & 0.7740 & 0.7870 & 2 & 06 \\
							& Meta      & 0.795 & 0.7661 & 0.0170 & 0.7540 & 0.7710 & 0.7770 & 32 & 28 \\
							& Epistasis & 0.795 & 0.7706 & 0.0136 & 0.7623 & 0.7730 & 0.7830 & 2 & 50 \\
	\midrule
	\multirow{3}{*}{30, 10}	& Original  & 0.779 & 0.7349 & 0.0181 & 0.7260 & 0.7310 & 0.7443 & 2 & 06 \\
							& Meta      & 0.805 & 0.7391 & 0.0179 & 0.7310 & 0.7370 & 0.7470 & 49 & 47 \\
							& Epistasis & 0.796 & 0.7366 & 0.0198 & 0.7220 & 0.7335 & 0.7960 & 3 & 48 \\
	\midrule
	\multirow{3}{*}{30, 20}	& Original  & 0.750 & 0.7039 & 0.0152 & 0.6938 & 0.7010 & 0.7113 & 2 & 51 \\
							& Meta      & 0.762 & 0.7181 & 0.0163 & 0.7070 & 0.7155 & 0.7243 & 49 & 47 \\
							& Epistasis & 0.770 & 0.7220 & 0.0133 & 0.7120 & 0.7200 & 0.7300 & 3 & 48 \\
	\midrule
	\multirow{3}{*}{50, 3}	& Original  & 0.776 & 0.7576 & 0.0102 & 0.7515 & 0.7590 & 0.7640 & 5 & 31 \\
							& Meta      & 0.776 & 0.7599 & 0.0119 & 0.7530 & 0.7585 & 0.7730 & 220 & 14 \\
							& Epistasis & 0.776 & 0.7578 & 0.0096 & 0.7508 & 0.7590 & 0.7630 & 6 & 34 \\
	\bottomrule
	\end{tabular}
	\begin{tablenotes}
		\footnotesize
		\item[*] On Intel (R) Core TM i7-6850K CPU @ 3.60GHz
	\end{tablenotes}
	\end{threeparttable}
\end{table}

\begin{figure}[H]
	\centering
	\includegraphics[width=0.8\textwidth]{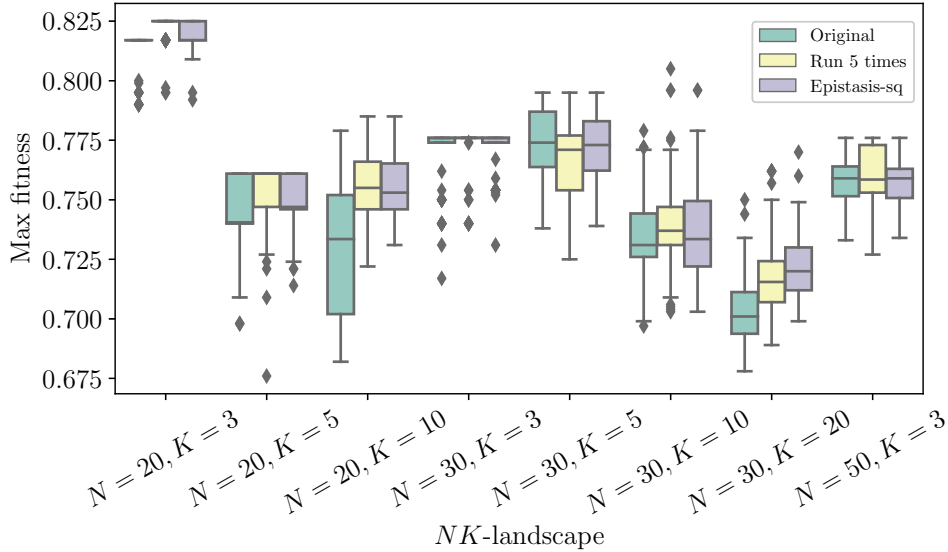}
	\caption{A box plot of each of the best solutions obtained by conducting the GA experiment 100 times on an instance of the $ NK $-landscape problem.} \label{fig:boxplot_nk}
\end{figure}

\subsection{Experimental Analysis}
The results of the above experiments confirmed that a basis obtained by estimating the epistasis improved the efficiency of searching for a solution using a GA. In this section, an analysis is performed to examine how much the basis found in the experiment reduced the epistasis. The basis was estimated in such a way that the epistasis of the sample population $ S $ was reduced. Whether the GA proposed in Section~\ref{sec5:find_basis_GA} was effective can be confirmed by comparing the epistasis of $ S $ and that of $ S^\prime $ in which the basis was changed to the one identified by the search $ S $. It is expected that the latter epistasis will be smaller.

A comparison of the epistasies between $ S $ and $ S^\prime $ in the variant-onemax and $ NK $-landscape experiments can be seen in Tables~\ref{tab:epi_var} and~\ref{tab:epi_nk}, respectively. First, in Table~\ref{tab:epi_var}, $ n $ is the chromosome length of the variant-onemax experiment. The sizes of the sample sets were $ n^2 $ and $ n^3 $, respectively; `Before' and `After' show the epistasies of $ S $ and $ S^\prime $, respectively. For every $ n $, it was confirmed that a lower epistasis value was obtained when the basis was changed. Moreover, when the sampling size was `square', the epistasis was reduced more compared to the `cubic'. Thus, there was a higher possibility that the GA would conduct a more efficient search and find a better solution. When $ n $ was 20, since there were $ 2^{20} $ solutions, the epistasis for all the solutions, not the sample sets, can be obtained. Here, it was confirmed that the epistasis was 4.50, and since the epistasis was 4.46 and 4.35 when the sampling sizes were square and cubic, respectively, this indicates that the original epistasis was accurately estimated.

In Section~\ref{sec52:crossover}, the size of the sample set $ S $ in the $ NK $-landscape experiment was $ N^2 $. Table~\ref{tab:epi_nk} shows the epistasies of $ S $ and $ S^\prime $ after the basis was changed, respectively, for the values of $ N,K $ used in the experiment. The `Before' and `After' results indicate the epistasies of $ S $ and $ S^\prime $, respectively. As in the case of the variant-onemax experiment, it was confirmed that for every $ n $, a lower value of epistasis was obtained when a change of basis was applied. When $ N $ was $ 20 $, the epistasis for all the solutions, but not the samplings, was obtained. When $ K $ was 3, 5, and 10, the epistasis was $ 3.24e^{-3},\ 3.38e^{-3},\ \text{and}\ 4.13e^{-3} $, respectively. These values are close to the respective epistases of $ S $, $ 3.17e^{-3},\ 3.16e^{-3},\ \text{and}\ 4.28e^{-3} $.

\begin{table}[H]
	\caption{Epistasis of the original and modified basis sampling in the variant-onemax problem.} \label{tab:epi_var}
	\vspace*{0.2cm}
	\centering
	\begin{threeparttable}
		\begin{tabular}{ccccc}
		\toprule
		\multirow{2.5}{*}{$ n $} & \multirow{2.5}{*}{Sampling size} & \multicolumn{3}{c}{Epistasis} \\	\cmidrule(lr){3-5}
		& & Before & After & Decrease rate (\%)\tnote{*}       \\
		\midrule
		\multirow{2}{*}{20} & square	& 4.46 & 3.23 & 27.6   \\
							& cubic		& 4.35 & 3.83 & 12.0   \\
		\midrule
		\multirow{2}{*}{30} & square	& 4.57 & 3.20 & 30.0   \\
							& cubic		& 5.00 & 3.72 & 25.6   \\
		\midrule
		\multirow{2}{*}{50} & square	& 9.27 & 7.53 & 18.8   \\
							& cubic		& 9.69 & 8.93 & { 7.8} \\
		\bottomrule
		\end{tabular}
		\begin{tablenotes}
			\footnotesize
			\item[*] $ \text{Decrease rate} = 100 \times \left( \text{Before} - \text{After} \right) / \text{Before}$
		\end{tablenotes}
	\end{threeparttable}
\end{table}

\begin{table}[H]
	\caption{Epistasis of the original and modified basis sampling in the $NK$-landscape problem.} \label{tab:epi_nk}
	\vspace*{0.2cm}
	\centering
	\begin{threeparttable}
		\begin{tabular}{r@{, }lccc}
			\toprule
			\multicolumn{2}{c}{\multirow{2.5}{*}{$ N,\, K $}} & \multicolumn{3}{c}{Epistasis} \\	\cmidrule(lr){3-5}
			\thead{} & & Before & After & Decrease rate (\%)\tnote{*} \\
			\midrule
			20 & 3	& $ 3.17e^{-3} $ & $ 2.25e^{-3} $ & 29.0    \\
			20 & 5	& $ 3.16e^{-3} $ & $ 2.90e^{-3} $ & { 8.2}  \\
			20 & 10	& $ 4.28e^{-3} $ & $ 3.82e^{-3} $ & 10.7    \\
			30 & 3	& $ 1.85e^{-3} $ & $ 1.60e^{-3} $ & 13.5    \\
			30 & 5	& $ 2.61e^{-3} $ & $ 2.37e^{-3} $ & { 9.2}  \\
			30 & 10	& $ 2.68e^{-3} $ & $ 2.39e^{-3} $ & 10.8    \\
			30 & 20	& $ 2.78e^{-3} $ & $ 2.50e^{-3} $ & 10.1    \\
			50 & 3	& $ 1.13e^{-3} $ & $ 9.32e^{-4} $ & 17.5    \\
			\bottomrule
		\end{tabular}
		\begin{tablenotes}
			\footnotesize
			\item[*] $ \text{Decrease rate} = 100 \times \left( \text{Before} - \text{After} \right) / \text{Before}$
		\end{tablenotes}
	\end{threeparttable}
\end{table}

\section{Conclusions} \label{sec7:conclusions}
In this paper, a epistasis-based evolutionary search method was proposed for estimating a basis that would simplify a particular problem. Two test problems were constructed, a basis was identified by estimating the epistasis, and after the basis was changed, the results before and after the basis change were compared. The epistasis-based basis estimation method was found to be extremely efficient compared to a meta-GA in terms of time. This was also found for the $ NK $-landscape in which the epistasis-based basis estimation method provided similar results. Thus, it is reasonable to estimate the basis by using the epistasis rather the meta-GA algorithm.

To estimate an epistasis, sample sets of size $ n^2 $ or $ n^3 $ sampling data were used. It was therefore necessary to conduct a study to find an appropriate sampling number. However, the method of finding the basis was carried out using a simple GA. In the future, a study should be conducted to identify a better basis. Also, by applying various factors in the GA or other genetic operators or by applying the method shown in appendix section, a higher quality search can be performed.

Furthermore, the experiment evaluated specific problems that could be simplified with a change of basis. In further research, it will be necessary to identify the characteristics of problems that could benefit from a change of basis. Note that the basis evaluation method is applicable to not only binary encoding, but also to $ k $-ary encoding. In addition, it can be used to evaluate any vector space in which the epistasis can be calculated.

\section*{Appendix}
We present the following lemma to prove Proposition~\ref{prop:myprop}:
\begin{lemma}
	Let $ M=(m_{ij}) $ be an $ n\times n $ binary matrix. For each $ i $ and $ j $ such that $ i\neq j $, the following four rules hold.
	\begin{enumerate}[label=(\roman*)]
		\item $ Row_i (A_n^{ij} M)=Row_i (M) $,
		\item $ Row_j (A_n^{ij} M)=Row_j (M)+Row_i (M) $,
		\item $ Row_i (S_n^{ij} M)=Row_j (M) $, and
		\item $ Row_j (S_n^{ij} M)=Row_i (M) $,
	\end{enumerate}
	where $ Row_i (M) $ is the $ i $-th row vector of matrix $ M $.
\end{lemma}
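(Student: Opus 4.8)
The plan is to reduce all four identities to the single elementary observation that left-multiplying $M$ by any matrix $P$ forms linear combinations of the rows of $M$: for each index $k$,
\begin{equation*}
Row_k(PM)=\sum_{l=1}^{n} P_{kl}\, Row_l(M),
\end{equation*}
which follows immediately from $(PM)_{km}=\sum_{l} P_{kl}M_{lm}$. Thus, once the relevant rows of the elementary matrices $A_n^{ij}$ and $S_n^{ij}$ are written out explicitly, each claim becomes a one-line substitution.

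First I would record the rows of the two elementary matrices directly from their definition as operations applied to $I_n$, writing $e_k^T$ for the $k$-th standard row vector. Since $A_n^{ij}$ is obtained from $I_n$ by adding row $i$ to row $j$, its $i$-th row is unchanged, namely $e_i^T$, while its $j$-th row becomes $e_j^T+e_i^T$; that is, $(A_n^{ij})_{il}=\delta_{il}$ and $(A_n^{ij})_{jl}=\delta_{jl}+\delta_{il}$, where $i\neq j$ guarantees these two rows are distinct. Similarly, $S_n^{ij}$ is $I_n$ with rows $i$ and $j$ interchanged, so its $i$-th row is $e_j^T$ and its $j$-th row is $e_i^T$, i.e.\ $(S_n^{ij})_{il}=\delta_{jl}$ and $(S_n^{ij})_{jl}=\delta_{il}$.

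Then I would substitute these into the row-combination formula. For (i), the $i$-th row of $A_n^{ij}$ contributes only the coefficient on $Row_i(M)$, giving $Row_i(A_n^{ij}M)=Row_i(M)$; for (ii), the $j$-th row contributes coefficients on both $Row_j(M)$ and $Row_i(M)$, giving $Row_j(A_n^{ij}M)=Row_j(M)+Row_i(M)$. Identities (iii) and (iv) follow in the same way from the rows of $S_n^{ij}$, which pick out $Row_j(M)$ and $Row_i(M)$ respectively.

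There is no genuine obstacle here: the statement is exactly the assertion that left multiplication by an elementary matrix performs the corresponding elementary row operation, and the only care required is bookkeeping --- keeping straight which row is added to which in $A_n^{ij}$, and remembering that all arithmetic takes place over $\mathbb{Z}_2$ so that ``$+$'' denotes XOR. An equivalent route, if one prefers to avoid the summation formula, is a purely entrywise comparison: fix a column index $m$ and verify $(A_n^{ij}M)_{im}=M_{im}$, $(A_n^{ij}M)_{jm}=M_{jm}+M_{im}$, together with the analogous equalities for $S_n^{ij}M$, directly from the definition of matrix multiplication.
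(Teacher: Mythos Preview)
Your proposal is correct and follows essentially the same approach as the paper: both verify the identities by directly computing the effect of left-multiplication by the elementary matrices on the rows of $M$. The paper simply displays the resulting row vectors of $A_n^{ij}M$ and $S_n^{ij}M$ (after a WLOG assumption $i<j$), whereas you make the underlying formula $Row_k(PM)=\sum_l P_{kl}\,Row_l(M)$ explicit and read off the entries of $A_n^{ij}$ and $S_n^{ij}$ via Kronecker deltas---this is the same computation, just with the intermediate step spelled out.
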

\begin{proof}
	Let $ m_i $ be the $ i $-th row vector of $ M $; that is $ M=(m_1,m_2,…,m_n )^T $. Without loss of generality, we assume that $ i<j $. Note that
	\begin{align*}
	A_n^{ij} M &= \left(\begin{matrix}
	\vdots \\
	Row_i (A_n^{ij}M) \\
	\vdots \\
	Row_j (A_n^{ij}M) \\
	\vdots
	\end{matrix}\right)
	= \left(\begin{matrix}
	\vdots \\
	m_i \\
	\vdots \\
	m_j + m_i \\
	\vdots
	\end{matrix}\right),\ \text{and} \\
	S_n^{ij} M &= \left(\begin{matrix}
	\vdots \\
	Row_i (S_n^{ij}M) \\
	\vdots \\
	Row_j (S_n^{ij}M) \\
	\vdots
	\end{matrix}\right)
	= \left(\begin{matrix}
	\vdots \\
	m_j \\
	\vdots \\
	m_i \\
	\vdots
	\end{matrix}\right).
	\end{align*}
	So, we have the following:
	\begin{enumerate}[label=(\roman*)]
		\item $ Row_i (A_n^{ij} M)=m_i=Row_i (M) $,
		\item $ Row_j (A_n^{ij} M)=m_j+m_i=Row_j (M)+Row_i (M) $,
		\item $ Row_i (S_n^{ij} M)=m_j=Row_j (M) $, and
		\item $ Row_j (S_n^{ij} M)=m_i=Row_i (M) $.
	\end{enumerate}
\end{proof}

\myprop*
\begin{proof}
	Let $ M=(m_{ij} ) $ be an $ n\times n $ binary matrix which $ m_i $ is the $ i $-th row vector; that is $ M=(m_1,m_2,…,m_n )^T $.
	\begin{enumerate}
		\item It is enough to show that the $ i $-th and $ j $-th row vectors of $ A_n^{ij} S_n^{ij} M $ are the same as those of $ A_n^{ji} A_n^{ij}  M $. Consider the left side: Using Lemma 1, we have 
		\begin{align*}
		Row_i (A_n^{ij} S_n^{ij} M)&=Row_i (S_n^{ij} M)=Row_j (M)=m_j,\ \text{and}\\
		Row_j (A_n^{ij} S_n^{ij} M)&=Row_j (S_n^{ij} M)+Row_i (S_n^{ij} M)=Row_j (M)+Row_i (M)=m_j+m_i .
		\end{align*}
		Now consider the right side:
		\begin{align*}
		Row_i (A_n^{ji} A_n^{ij} M)&=Row_i (A_n^{ij} M)+Row_j (A_n^{ij} M)=Row_i (M)+(Row_j (M)+Row_i (M))=m_j, \text{ and} \\
		Row_j (A_n^{ji} A_n^{ij} M)&=Row_j (A_n^{ij} M)=Row_i (M)+Row_i (M) = m_i+m_j .
		\end{align*}
		\item 	We know $ A_n^{ij} S_n^{ij} = A_n^{ji} A_n^{ij} $. We multiply $ A_n^{ij} $ in both sides. Then, the left side is $ A_n^{ij} A_n^{ij} S_n^{ij}=I_n S_n^{ij}=S_n^{ij} $, and so $ S_n^{ij}=A_n^{ij} A_n^{ji} A_n^{ij} $.
		\item $ S_n^{ij} = S_n^{ji} $ by the definition of $ S_n^{ij} $. Now, consider the euqation $ S_n^{ij}A_n^{ji} = S_n^{ji}A_n^{ji} $. Note that the left side
		\[S_n^{ij}A_n^{ji} = \left( A_n^{ij}A_n^{ji}A_n^{ij}\right)A_n^{ji} = \left(A_n^{ij}A_n^{ji}\right)^2,\]
		and note that the right side
		\[ S_n^{ji}A_n^{ji} = \left(A_n^{ji}A_n^{ij}A_n^{ji}\right)A_n^{ji} = \left(A_n^{ji}A_n^{ij}\right)\left(A_n^{ji}\right)^2 = A_n^{ji}A_n^{ij} . \] 
	\end{enumerate} 
\end{proof}

\section*{Data Availability}
The data used to support the findings of this study are included within the article.

\section*{Conflicts of Interest}
The authors declare that they have no conflicts of interest.

\section*{Acknowledgment}
The present research has been conducted by the Research Grant of Kwangwoon University in 2019.
This research was supported by a grant (KCG-01-2017-05) through the Disaster and Safety Management Institute funded by Korea Coast Guard of Korean government, and by Basic Science Research Program through the National Research Foundation of Korea (NRF) funded by the Ministry of Education (No. 2015R1D1A1A01060105).


\end{document}